\documentclass[]{style}

\usepackage[toc,page,header]{appendix}

\usepackage{natbib}

\usepackage{CJKutf8}

\usepackage{xargs}  

\usepackage{todonotes}  

\usepackage{multirow}

\usepackage{cleveref}

\usepackage{amsmath}
\usepackage{dsfont}

\usepackage{svg}

\usepackage{mathrsfs}
\usepackage{adjustbox}
\usepackage{multirow}
\usepackage{multicol}
\usepackage{tcolorbox}
\usepackage{changepage}
\usepackage{enumitem}
\usepackage{graphicx}
\usepackage{amssymb}
\usepackage{xcolor}
\usepackage{float}
\usepackage{multirow}
\usepackage{threeparttable}
\usepackage{graphicx}
\usepackage{subcaption}
\usepackage{algorithm}
\usepackage{algpseudocode}
\usepackage{wrapfig}
\usepackage{epsfig}
\usepackage{overpic}
\usepackage{diagbox}
\usepackage{fontawesome}
\usepackage{bbding}
\usepackage{wasysym}
\usepackage{utfsym}
\usepackage{mathtools}
\usepackage[table]{xcolor}  % loads xcolor with table support
\usepackage{colortbl}       % gives \columncolor, \rowcolor
\usepackage{tabularx} % Add to your preamble
\usepackage{makecell} % Add to your preamble
\usepackage[dvipsnames]{xcolor}

\newcolumntype{g}{>{\columncolor{gray!10}}c} % gray background

\definecolor{catgray}{gray}{0.9}
\definecolor{skyblue}{rgb}{0.53,0.81,0.92} % sky blue

\colorlet{skyblue!30}{skyblue!30!white} % 30% skyblue, 70% white

\definecolor{customblue}{RGB}{70,130,180}  % This is equivalent to rgb(70,130,180)

\newtcolorbox{evolbox}[2][]{%
  enhanced,
  colframe=customblue,
  colback=white,
  coltitle=white,
  rounded corners,
  boxrule=1pt,
  titlerule=0pt,
  toptitle=1mm,
  bottomtitle=1mm,
  fonttitle=\bfseries,
  width=#2\textwidth, % This takes the second parameter as the width fraction
  #1
}
\usepackage{minitoc}
\usepackage{url}

\PassOptionsToPackage{table,xcdraw}{xcolor}
\usepackage{titletoc}
\usepackage{placeins}
\usepackage{pifont}

\definecolor{RowBlue}{HTML}{E9F2FB}
\definecolor{RowRed}{HTML}{F9EAEA}
\definecolor{Top1}{HTML}{50DB4B} % 深绿
\definecolor{Top2}{HTML}{A5FFA2} % 中绿
\definecolor{Top3}{HTML}{D9FFD9} % 浅绿
\definecolor{Sub1}{HTML}{EAB8B8}
\definecolor{Sub2}{HTML}{E4E4E4}

\renewcommand{\emph}[1]{\textit{#1}}

\usepackage[most]{tcolorbox}
\tcbuselibrary{theorems,skins,breakable}

\newtcbtheorem[
  number within=section,
  crefname={Theorem}{Theorems}
]{theorem}{Theorem}{
  enhanced,
  breakable,
  colback=hitblue!3,
  colframe=hitblue,
  boxrule=0.6pt,
  arc=8pt,
  left=6pt,
  right=6pt,
  top=6pt,
  bottom=6pt,
  colbacktitle=hitblue,
  coltitle=white,
  fonttitle=\sffamily\bfseries,
  lefttitle=4pt,
  righttitle=4pt,
  toptitle=1mm,
  bottomtitle=0.4mm,
  boxed title style={
    sharp corners,
    boxrule=0pt
  },
  before skip=6pt,
  after skip=6pt
}{thm}

\title{MirrorLA: Reflecting Feature Map for \\ Vision Linear Attention}
\headtitle{MirrorLA: Reflecting Feature Map for Vision Linear Attention}

\author[1,2]{Weikang Meng}
\author[1]{Liangyu Huo}
\author[3]{Yadan Luo}
\author[1,2]{Yaowei Wang}
\author[2]{Yingjian Li}
\author[1,\text{$\dagger$}]{Zheng Zhang}

\affiliation[1]{Harbin Institute of Technology, Shenzhen}
\affiliation[2]{{Pengcheng Laboratory}}
\affiliation[3]{UQMM Lab, University of Queensland}
\contribution[\text{$\dagger$}]{Corresponding author}

\preprint{Preprint version.}
\abstract{
Linear attention significantly reduces the computational complexity of Transformers from quadratic to linear, yet it consistently lags behind softmax-based attention in performance. We identify the root cause of this degradation as the non-negativity constraint imposed on kernel feature maps: standard projections like ReLU act as ``passive truncation'' operators, indiscriminately discarding semantic information residing in the negative domain. We propose MirrorLA, a geometric framework that substitutes passive truncation with active reorientation. By leveraging learnable Householder reflections, MirrorLA rotates the feature geometry into the non-negative orthant to maximize information retention. Our approach restores representational density through a cohesive, multi-scale design: it first optimizes local discriminability via block-wise isometries, stabilizes long-context dynamics using variance-aware modulation to diversify activations, and finally, integrates dispersed subspaces via cross-head reflections to induce global covariance mixing. MirrorLA achieves state-of-the-art performance across standard benchmarks, demonstrating that strictly linear efficiency can be achieved without compromising representational fidelity.
}

\correspondence{\email{darrenzz219@gmail.com}}

\begin{document}

\maketitle

% efficient attention -> LA

% kernel design; hybrid -> neglect root cause feature map passively truncate due to non-negative -> "dead dimension" -> resort to  isometry geometry -> householder actively reorient -> maximum feature preservation;
% Beyond single-head reflection effective for local details-> long-sequence SR .. -> diversification (activation, head)
% (1) Additionally,  to gaurantee long-sequence high-aligned tokens not mapped to same orthant -> variance conditioned shift "angle" -> diversify...
% (2) as stated in ... "head-head", diversify covariance mixing...

% long-sequence perf
\section{Introduction}
Transformer-based models \citep{transformer, vit} have become the dominant paradigm for modeling long-range dependencies in vision and language tasks. 
Their success largely stems from the self-attention mechanism, which aggregates global context via softmax-normalized dot-product similarity. 
Despite its effectiveness, softmax attention suffers from a quadratic computational complexity $O(N^2d)$ relative to the sequence length $N$, making it prohibitively expensive for long sequences and high-resolution visual inputs.

Linear Attention (LA) \citep{linearattn, minimax, polaformer} addresses this limitation by reformulating softmax attention using kernel feature maps that enable associative reordering of matrix multiplications.
This reformulation shifts the computation order from $\operatorname{Softmax}(\mathbf{QK^\top})~\mathbf{V}$ to $\phi(\mathbf{Q})~(\phi(\mathbf{K})^\top V)$.
Consequently, the complexity is reduced to $O(Nd^2)$, maintaining the benefits of global context modeling while ensuring computational efficiency for resource-intensive tasks.
However, despite extensive research, linear attention methods consistently underperform their softmax-based counterparts in practice, especially in vision tasks. Closing this performance gap remains a central open problem.

Previous approaches primarily attributes this gap to the design of kernel feature maps $\phi(\cdot)$. To ensure numerical stability and avoid gradient explosion, most linear attention formulations impose a non-negativity constraint on feature maps, commonly enforced via \textit{axis-aligned activations} such as $\operatorname{ReLU}$ \citep{efficientvit,flatten}, $1+\operatorname{ELU}$ \citep{linearattn}, or exponential mappings \citep{lognormalattention}. While effective for stability, these kernels inherently discard negative feature components, triggering a ``dead dimension'' crisis that severely limits representational capacity.
Some efforts attempt to mitigate the information loss caused by strict feature mapping, for instance, Hedgehog \citep{hedgehog} employs MLP-$\exp$ based mappings, while Polaformer \citep{polaformer} and Nalaformer \citep{nalaformer} explore sign-decomposition and cosine similarity. However, they often rely on expanding the feature dimension $d$ by $k$ times, which paradoxically raise the computational cost to $\mathcal{O}(k^2d^2)$ and undermines the efficiency advantage of LA.

\begin{figure}
    \centering
    \includegraphics[width=0.55\linewidth]{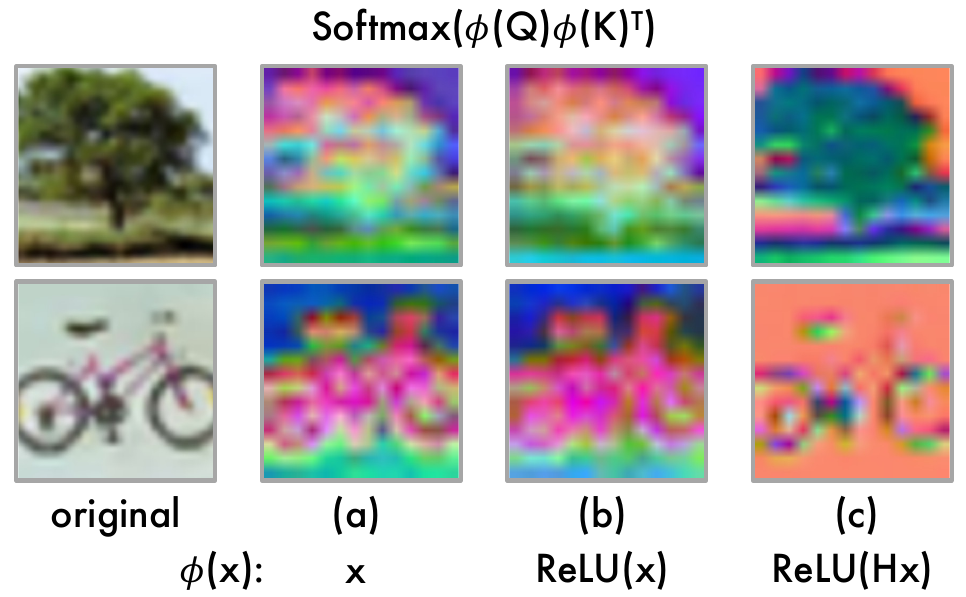}
    \caption{\textbf{PCA Visualization of Feature Topology.} We visualize $\operatorname{Softmax}(\phi(\mathbf{Q})\phi(\mathbf{K})^\top)$ under constant normalization across three paradigms: 
(a) \textit{Vanilla Attention}, $\phi(\mathbf{x})=\mathbf{x}$ or $\mathbf{Hx}$; 
(b) \textit{Passive Truncation} $\phi(\mathbf{x})=\operatorname{ReLU}(\mathbf{x})$ results in ``dead'' dimension and information loss; 
(c) \textit{Active Reorientation}, $\phi(\mathbf{x})=\operatorname{ReLU}(\mathbf{Hx})$ employs an isometric Householder reflection. This aligns informative features with the positive orthant, recovering the rich structural details lost in (b).}
% In (a), $\phi(\mathbf{x})=\mathbf{x}$ or $\mathbf{Hx}$ is mathematically equivalent, since $\langle \mathbf{Hq}, \mathbf{Hk} \rangle = \langle \mathbf{q}, \mathbf{k} \rangle$.}
    \label{fig:pca}
\end{figure}

We argue that the such a limitation lies not in the non-negativity constraint itself, but in the rigidity of its enforcement. 
Crucially, attention similarity is based on the inner product $\langle \mathbf{q}, \mathbf{k}\rangle$, which is rotation invariant. Therefore, preserving similarity does not require preserving the original coordinate axes. 
Instead of \textit{passively} truncating negative components, we hypothesize that one can  \textit{actively} reorient the feature space such that rotaing information-rich directions into the non-negative orthant before truncation. 
% This transforms the non-negativity constraint from an indiscriminate filter into a controllable geometric operation.

% Current linear attention designs heavily rely on non-negative activation functions, such as $\operatorname{ReLU},\exp$, to satisfy the non-negativity constraint required for similarity. 
% However, we argue that the core limitation of existing linear attention methods lies in the \textbf{passive truncation} of non-negative feature maps. Crucially, preserving similarity does not require preserving coordinate axes. 
% Instead of passively truncating negative components via fixed axis-aligned activations, we propose to actively reorient the feature space such that information-rich directions are aligned with the non-negative orthant before truncation. 
% This transforms the non-negativity constraint from an indiscriminate filter into a controllable geometric operation.

To validate this hypothesis, we conduct a controlled PCA visualization of feature maps in Fig. \ref{fig:pca}, comparing standard ReLU truncation against a \textit{rotation-augmentation} mapping. We implement this rotation via a Householder reflection $\mathbf{H}$, chosen for its strict \textit{isometric} \citep{householderisometry} property ($\langle \mathbf{Hq}, \mathbf{Hk} \rangle = \langle \mathbf{q}, \mathbf{k} \rangle$). A critical observation emerges: compared to the standard attention (a), the ReLU feature map (b) results in a \textit{blurred} and sparse approximation, showing that passive truncation indiscriminately discards the fine-grained contrast details.
In contrast, the reflected map (c) produces a distribution that is visually \textit{distinct} yet structurally rich. This confirms that the information loss in LA is not inevitable; by transforming the non-negativity constraint from a destructive filter into a controllable geometric operation, we can restore the latent structural integrity typically discarded by axis-aligned clipping.

Motivated by these observations, we propose \textbf{MirrorLA}, a linear attention framework that shifts the paradigm from \textit{passive clipping} to \textit{active reorientation}. MirrorLA adopts a hierarchical design based on learnable Householder reflections to optimize feature topology at local, dynamic and global levels.
At the \textit{single-head} level, MirrorLA enables precise geometric control of full-rank rotation by decomposing high-dimensional feature space into disjoint 2D subspaces. We then apply strictly isometric Householder reflections within each block to align information-bearing directions into the non-negative orthant without altering pairwise distances. To avoid mirroring features into overly narrow subspace and trigger feature collapse in long-context modeling, we make the reflection \textit{variance-conditioned}: the reflection angle is coupled to the block-wise signal variance, inducing a dynamic perturbation that yields a theoretical lower bound on activation diversity. This mitigates the dead-zone effect, where clustered tokens in low-variance regimes are indiscriminately truncated by static non-negativity boundaries. To overcome the rank fragmentation inherent in independent multi-head processing \citep{knockinghead, hybridLA}, we extend MirrorLA to the global scale via a unified \textit{cross-head reflection}. This global operator couples feature spaces across heads through covariance mixing, allowing dormant subspaces to inherit discriminative structure from informative heads and improving overall expressivity.

We evaluate our method on five vision benchmarks across nine datasets, covering image classification, detection and segmentation, semantic segmentation, diffusion models, and ultra-long sequence super-resolution. 
Our model consistently outperforms baselines, with gains of up to 4.4\% on classification, 4.7\% on detection and segmentation, and a 6.6 mIoU improvement on semantic segmentation under lower FLOPs.
Notably, for ultra-long sequence super-resolution, it achieves higher accuracy while reducing memory by up to 81.4\% and inference time by up to 78\%, and also yields lower FID in diffusion tasks.

% Transformer based model \citep{transformer, vit} 已经展现出亮眼的性能in both vision and natural language tasks. Its core component, self-attention, 能够通过内积相似度与softmax归一化有效捕获全文信息，但是导致了平方复杂度，对长序列文本或高分辨率图片/视频的场景下带来了巨大的挑战。为了解决这一问题，linear attention\citep{linearattn}用linearly separable kernel代替了softmax中的exp()。这样重新定义相似度的自注意力机制可以利用矩阵乘法结合律，实现用线性复杂度计算自注意力机制。

% 近期的工作希望能够避免核函数带来的负值缺失问题，polaformer采用了sign分解的方式，让向量做内积计算时可以按照符号，让负值也参与计算；hedgehog通过衔接mlp再加上exp的方式让模型学习到softmax attention的weight，nalaformer则按照向量维度的余弦相似度代替传统的内积相似度。但这些方法都引入了过多的计算量，增加了feature dim的维度，这对于$O(Nd^2)$的线性注意力机制是不友好的。

% 尽管linear attention凭借高速度和低显存占用等优势被广泛改进，在对比他们的softmax-counterparts时，仍然underperform。作为其核心要素，kernel的设计成为研究热点，, which approximates attention through inner products of transformed queries and keys。早期工作主要focus在相似度的非负约束上， which 是一个必要条件保证归一化因子不为0，否则会导致训练过程中的不稳定和梯度爆炸。To this end, various non-negative activation functions have been employed。

% 决定linear表达能力的核心要素是kernel function \phi的设计，which 定义了相似度\phi(q)\phi(k)^\top。但受限于相似度的非负与线性可分两个约束，导致不得不把\phi设计成逐元素非负映射，因此现有linear attention任务表现往往是不如softmax-based counterparts的。已有的工作采用ReLU、1+ELU、exp等逐元素映射方式把所有的计算都压缩在$\mathbb{R}^+ \cup \{0\}$上进行计算。
\section{Related Work}
\textbf{Efficient Sequential Vision Models.}
Treating images as token sequences has become a standard approach in modern vision architectures, inspired by the success of Transformer models in natural language processing \citep{transformer}. 
Vision Transformer (ViT) \citep{vit} demonstrates that patchified images can be processed by self-attention to capture long-range dependencies, but its reliance on softmax attention incurs quadratic computational and memory costs, which quickly become prohibitive for high-resolution vision tasks.
To mitigate this issue, a line of research focuses on improving the efficiency of Transformer-based vision models while largely preserving the attention formulation.
DeiT \citep{deit} leverages distillation strategies to enhance data efficiency, and Swin Transformer \citep{swin,swinv2} restricts attention to shifted local windows, enabling scalable training while maintaining cross-window information flow.
These methods primarily reduce constant factors or constrain attention scope, rather than fundamentally resolving the quadratic complexity of softmax attention.
More recent efforts explore alternatives beyond standard attention mechanisms to further improve scalability.
State space models (SSMs) have been adapted to vision by sequentially processing patch tokens, with VMamba \citep{vmamba,localvmamba} adopting raster-scan ordering to model spatial dependencies under linear-time complexity.
In a different vein, VHeat \citep{vheat} formulates visual representation learning as a diffusion-like process and models patch interactions through heat conduction, achieving sub-quadratic complexity via discrete cosine transform (DCT)-based operations.

\textbf{Linear Attention.}
Linear attention reduces the quadratic complexity of softmax attention by replacing the exponential similarity with kernelized feature maps, enabling associative reordering and linear-time computation \citep{linearattn}. 
Early approaches focus on approximating softmax using separable kernels, with representative designs based on ReLU \citep{flatten,efficientvit}, $\exp$ \citep{lognormalattention}.
Performer \citep{performer} utilizes Random Fourier Features (RFF) to provide an unbiased estimate of the softmax kernel, while CosFormer \citep{cosformer} employs a linearithmic complexity approach with a ReLU-based kernel and cosine re-weighting to enforce locality.
Linformer \citep{linformer} approximates the self-attention matrix via low-rank decomposition, projecting the key and value sequences into a lower-dimensional subspace to achieve linear complexity.
Beyond kernel selection, recent methods further introduce data-dependent modulation or gating mechanisms to stabilize large-scale training \citep{lightningattn,minimax}.
While effective, such approaches still rely on fixed geometric parameterizations and do not explicitly address the information loss induced by kernelization.
From a theoretical perspective, PolaFormer \citep{polaformer} mitigates information loss from non-negative feature maps via sign-aware inner product decomposition and using power function to lower the attention entropy.
SAGA \citep{saga} and RALA \citep{rala} exploit the property of the Hadamard product to enhance the intermediate matrices, thereby alleviating the low-rank bottleneck in linear attention.
In-Line \citep{inline} shows that softmax attention is injective under mild conditions, which is generally not in linear attention, and MALA \citep{mala} and NaLaFormer \citep{nalaformer} also link this gap to information loss induced by non-negative feature maps.
Consequently, while existing studies strive to preserve non-negativity, they largely overlook the fact that conventional activations as \textit{passive truncation} mechanisms, incurring severe information loss and limiting the representational capacity of linear attention.
% \input{sections/2-preliminaries}
% \subsection{Preliminaries}
\section{Method}
\noindent\textbf{The Information Loss of Linear Attention}. Given an input sequence of length $N$ and dimension $d$, Query ($\mathbf{Q}$), Key ($\mathbf{K}$), and Value ($\mathbf{V}$) matrices are in $\mathbb{R}^{N \times d}$. Standard attention computes pairwise similarities via:
\begin{align}
    &\mathbf{o}_t =
    \sum_{i=1}^{N} \frac{ \operatorname{exp} (\mathbf{q}_t \mathbf{k}_i^{\top} /\sqrt{d})}
    {\sum_{j=1}^{N} \operatorname{exp} (\mathbf{q}_t\mathbf{k}_j^{\top} /\sqrt{d})}\mathbf{v}_i,
    \label{softmaxnorm}
\end{align}
which results in $\mathcal{O}(N^2d)$ complexity. To alleviate this, Linear Attention (LA) re-formulates the attention mechanism by replacing the softmax-normalized dot product with a generalized kernel function $\operatorname{sim}(\mathbf{q}, \mathbf{k}) = \phi_{r}(\mathbf{q})\phi_{r}(\mathbf{k})^{\top}$. By leveraging the associative property of matrix multiplication, the complexity can be reduced to  $\mathcal{O}(Nd^2)$ as
\begin{align}
    \mathbf{o}_t=& 
    \frac{ \sum_{i=1}^{N}\phi_{r}(\mathbf{q}_t)\phi_{r}(\mathbf{k}_i)^{\top}\mathbf{v}_i}
    {\sum_{j=1}^{N} \phi_{r}(\mathbf{q}_t)\phi_{r}(\mathbf{k}_j)^{\top}}=\frac{\phi_{r}(\mathbf{q}_t)\sum_{i=1}^{N}\phi_{r}(\mathbf{k}_i)^{\top}\mathbf{v}_i}
    {\phi_{r}(\mathbf{q}_t)\sum_{j=1}^{N} \phi_{r}(\mathbf{k}_j)^{\top}}.
    \label{linearattn}
\end{align}

However, a critical constraint in this formulation is that the choice of the kernel feature map $\phi_{r}(\cdot)$ must ensure \textit{non-negativity} $\operatorname{sim}(\mathbf{q}, \mathbf{k})\geq0$. Otherwise, the denominator $\phi(\mathbf{q})\sum \phi(\mathbf{k_j})^\top$ will approach zero, resulting in numerical instability and gradient explosion during training. To satisfy this for arbitrary data distributions of $\mathbf{Q}$ and $\mathbf{K}$, previous methods typically employ \textit{axis-aligned} mappings such as $\operatorname{ReLU}(\cdot)$ or $\exp(\cdot)$ to constrain the feature maps to the non-negative orthant. 
Despite their simplicity, such projections act as a hard thresholding operator and indiscriminately \textit{suppress} features in the negative domain (as depicted in Fig.\ref{fig:mainfig}), leading to irreversible information loss that hampers the model's expressive capacity.

\begin{figure*}
    \centering
    \includegraphics[width=0.95\linewidth]{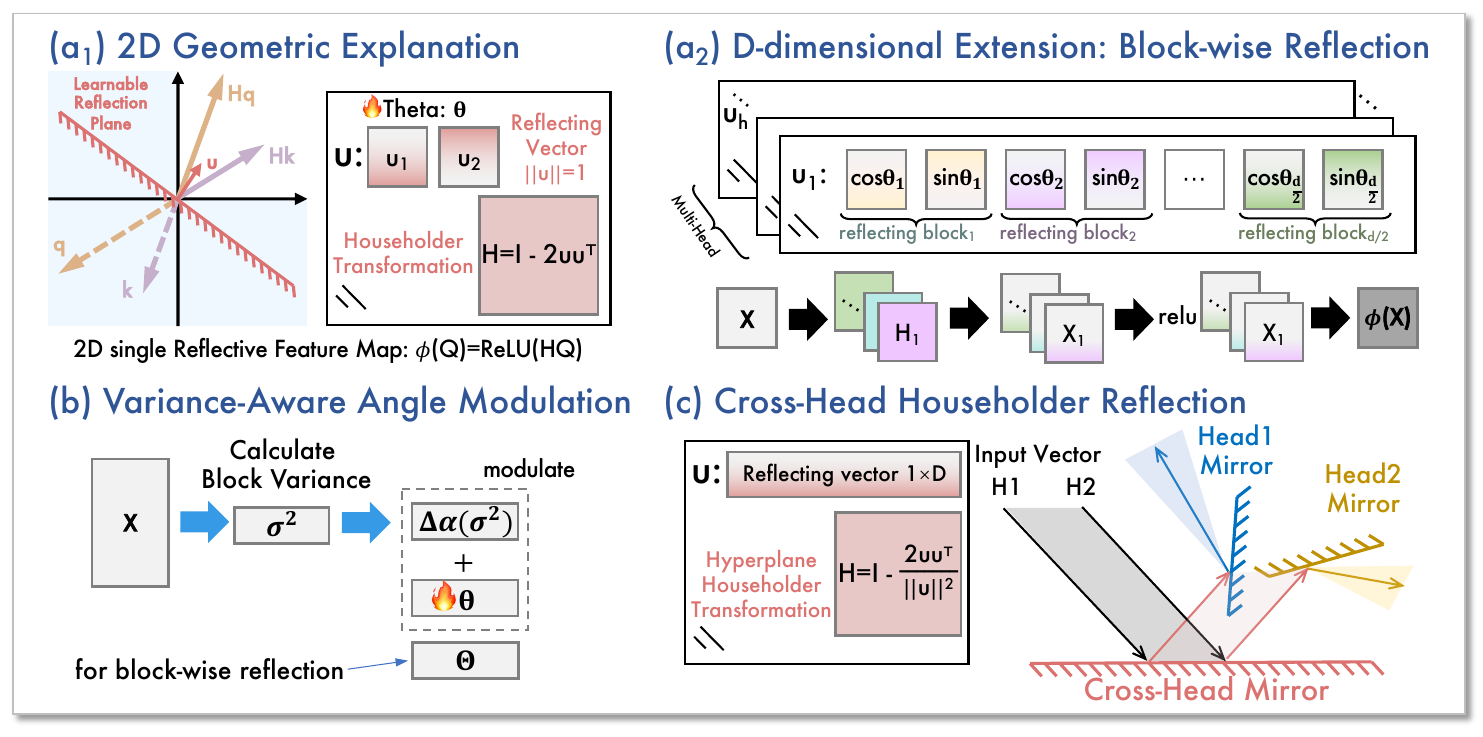}
    \caption{\textbf{Overview of the MirrorLA framework}. (a1) In 2D, a learnable Householder reflection $\mathbf{H}$ reorients features $\mathbf{Hq}$ $\mathbf{Hk}$ before applying an axis-aligned non-negativity map, converting passive truncation into active alignment while preserving inner products by isometry. (a2) Extension to $D$-dim via block-wise and reflection with low overhead (b) Adaptively adjusting reflection angles based on block variance $\sigma^2$ to enhance feature diversity. (c) A global transformation before head-wise decomposition to encourage inter-head communication.}
    \label{fig:mainfig}
\end{figure*}

\textbf{Overview of MirrorLA.} To resolve the conflict between the non-negativity constraint and information preservation, we propose MirrorLA. Drawing on deep geometric isometries \citep{mhammedi2017efficient}, MirrorLA introduces a learnable reflection-based mechanism. Rather than passively \textit{truncating} negative features, we optimally \textit{reorient} the feature geometry into the non-negative orthant via Householder transformations prior to activation so that the QK interactions can be steadily preserved. MirrorLA comprises three components: (1) \textit{Block-wise Householder isometries} for local feature preservation (Sec.  \ref{sec:single_head}); (2) \textit{Variance-aware modulation} to counteract oversmoothing in long-context modeling (Sec.  \ref{sec:single_head}); (3) \textit{Cross-head reflection} to enable inter-head communications and global covariance mixing (Sec. \ref{sec:cross_head}).

\subsection{Block-wise Householder Isometries}\label{sec:single_head}
\noindent\textbf{Householder Transformation}.
To preserve feature magnitude while satisfying non-negativity, we employ Householder reflections. The Householder matrix associated with a unit mirror vector $\mathbf{u}\in\mathbb{R}^d$ defines a reflection across the hyperplane orthogonal to $\mathbf{u}$:
\begin{align}
\mathbf{H_u}=\mathbf{I}-2\frac{\mathbf{uu^\top}}{||\mathbf{u}||_2^2}.
\end{align}
This transformation is orthogonal ($\mathbf{H}_{\mathbf{u}}^\top\mathbf{H}_{\mathbf{u}}=\mathbf{I}$), ensuring it is an isometry that preserves Euclidean norms. Accordingly, the reflection of an input vector $\mathbf{x}$ with respect to this hyperplane is obtained as $\mathbf{H_u x}$. We first show a simplified 2D case to build a geometric interpretation:

% Then we describe how this idea can be extended to high-dimensional query and key vectors via a block-wise formulation.

\noindent\textbf{2D Case: Intuition of Mirrored Feature Map}.  
Let the mirror vector $\mathbf{u} = [u_1, u_2]^\top \in \mathbb{R}^2$ be a unit mirror vector and the input be $\mathbf{x} \in \mathbb{R}^2$.  
The corresponding Householder reflection is:
\begin{align}
    \tilde{\mathbf{x}} = \mathbf{H_u x}, \quad 
    \mathbf{H_u} = \mathbf{I} - 2\mathbf{u}\mathbf{u}^\top,
\end{align}
where $\|\mathbf{u}\|_2=1$. Parameterizing $\mathbf{u}$ by an angle $\theta$,
\begin{align}
    u_1 = \cos\theta, \quad u_2 = \sin\theta.
\end{align}
we obtain the closed form 2D Householder formula as
\begin{align}
    \mathbf{H_u} 
    &= 
    \begin{bmatrix}
        1 - 2 u_1^2 & -2 u_1 u_2 \\
        -2 u_1 u_2 & 1 - 2 u_2^2
    \end{bmatrix} =
    \begin{bmatrix}
        \cos 2\theta & \sin 2\theta \\
        \sin 2\theta & -\cos 2\theta
    \end{bmatrix}.
\end{align}
Thus, in two dimensions, the Householder transformation can be interpreted as a reflection across a line with angle $\theta$, providing an intuitive view of how the input is mirrored before further non-negative mapping such as $\operatorname{ReLU}$, which is defined as,
\begin{align}
    \phi_{r}(\mathbf{x})=\operatorname{ReLU}(\mathbf{H_u x})
\end{align}
As shown in Fig.~\ref{fig:mainfig} (left), a learnable unit vector $\mathbf{u}$ parameterized by an angle $\theta$ allows the model to optimally rotate the mirror geometry relative to the ReLU activation boundary. Instead of passive truncation, the elements are reflected into the active region, preserving structural information that would otherwise be discarded. Consequently, the reflected features retain richer information and display more balanced statistics, which enhances the expressiveness of the feature map and yields more \textit{stable} linear attention estimation.

\noindent\textbf{Extension to High-Dimension: Block-wise Reflection}. 
To scale this to arbitrary dimensions $D$ efficiently, we partition the query or key feature vector $\mathbf{x}\in\mathbb{R}^{D}$ into $M=D/2$ disjoint two-dimensional blocks:
\begin{align}
\mathbf{x} = \big[\mathbf{x}_1,\mathbf{x}_2,\ldots,\mathbf{x}_M\big],\quad
\mathbf{x}_m\in\mathbb{R}^2.
\end{align}
For each block $m$, we associate a learnable mirror vector $\mathbf{u}_m=\begin{bmatrix}
\cos\theta_m\
\sin\theta_m
\end{bmatrix}$ and reflect each block independently:
\begin{align}
\mathbf{H}_m = \mathbf{I}_2 - 2\mathbf{u}_m (\mathbf{u}_m)^\top, ~\tilde{\mathbf{x}}_m = \mathbf{H}_m \mathbf{x}_m,
\end{align}
Stacking the reflected blocks gives the full transformed vector, which is followed by $\operatorname{ReLU}$ to enforce non-negativity:
\begin{align}
\phi_{r}(\mathbf{x})=\operatorname{ReLU}\left(\tilde{\mathbf{x}}\right), ~\tilde{\mathbf{x}}=
\big[\tilde{\mathbf{x}}_1,\ldots,\tilde{\mathbf{x}}_M\big]\in\mathbb{R}^D.
\end{align}
Given a query $\mathbf{q}$ and key $\mathbf{k}$, their linear attention kernel is computed in a blockwise \textit{additive} manner. For each head $h$, we evaluate the similarity by summing contributions from all 2D blocks:
\begin{align}
\langle \phi_{r}(\mathbf{q}),\phi_{r}(\mathbf{k}) \rangle
= \sum_{m=1}^{M}
\left\langle
\phi_{r}(\mathbf{q}_m),
\phi_{r}(\mathbf{k}_m)
\right\rangle ,
\end{align}
which is equivalent to the standard inner product over $\mathbb{R}^D$ but explicitly decomposed into 2D reflective components.
This block-wise design imposes a local mixing bias, reorienting features within small coordinate groups rather than globally entangling all dimensions. This yields stable geometry adaptation with $\mathcal{O}(d)$ complexity and bounded operator norm inherited from orthogonal reflection. 

\subsection{Rotating the Mirror: Variance-Aware Angle Modulation}\label{sec:variance}
While block-wise reflections optimize static geometry, long-context modeling suffers from \textit{feature oversmoothing}, where token embeddings concentrate in a narrow subspace. If the reflection angle $\theta_m$ is static, hese aligned tokens may all map to the same sign pattern, causing the subsequent ReLU to zero-out entire sequence segments identically.

To mitigate this, we introduce a \emph{variance-aware angle modulation} mechanism that \textit{adaptively} perturbs the reflection angle based on the block variance $\sigma_{m}^2$ across the sequence length $L$, encouraging activation-pattern diversity in low-variance subspaces. 
Let $\mathbf{x}_{t,m}\in\mathbb{R}^2$ denote the feature vector of block $m$ at token $t$, and let $\bar{\mathbf{x}}_m=\frac{1}{L}\sum_{t=1}^{L}\mathbf{x}_{t,m}$ be the emprical mean. We define the block-level variance as
\begin{equation}
\sigma_m^2
=
\frac{1}{2L}
\sum_{t=1}^{L}
\|
\mathbf{x}_{t,m}
-
\bar{\mathbf{x}}_{m}
\|^2.
\end{equation}
Intuitively, a small $\sigma_m^2$ indicates that token representations within this block carry weak token-specific information, and thus benefit from a stronger reorientation relative to the activation boundary. We accordingly modulate the reflection angle by:
\begin{align}
&\Theta_m = \theta_m + \Delta\alpha(\sigma^2), \label{eq:mod}\\&\Delta\alpha(\sigma^2)=\operatorname{sigmoid}\left(\frac{\lambda}{\sigma_m^2 + \varepsilon}\right)\cdot\alpha_{\operatorname{max}},
\end{align}
where $\theta_m$ denotes the learnable base reflection angle for block $m$, $\alpha_{\operatorname{max}}\in \{\pi/8, \pi/4, \pi/2\}$ is a scaling hyperparameter controlling the maximum perturbation, $\lambda$ controls the variant sensitivity, and $\varepsilon$ is a small constant for numerical stability. We then replace $\theta_m$ with the variance-aware angle $\Theta_m$ in the Householder reflection, while keeping the per-block computation linear in $D$.
\paragraph{Theoretical Motivation.}
The following theorem formalizes how this modulation promotes activation diversity.

\begin{theorem}{Collapse-aware Activation Diversification}{variance_shift}
Let $\mathcal{X}=\{\mathbf{x}_{t,m}\}_{t=1}^{L}\subset\mathbb{R}^{2}$ be the token features of a 2D block with empirical variance $\sigma^{2}$ computed over the token dimension. Consider the modulated mirror map
\begin{equation}
\hspace{-1ex}\phi(x;\Theta) \;=\; \mathrm{ReLU}\big(\mathbf{H}_m(\Theta_m)\mathbf{x}_{t,m}\big),
\end{equation}
where $\Delta\alpha(\sigma^2)=\mathrm{sigmoid}(1/(\sigma^2+\varepsilon))\cdot\alpha$ and $\mathbf{H}(\Theta_m)$ denotes a 2D Householder reflection parameterized by angle $\Theta_m$ (Eq. \ref{eq:mod}).
Define the binary activation mask:
\begin{equation}
s(\mathbf{x}_{t,m};\Theta_m)\;=\;\mathbb{I}\!\left[\mathbf{H}(\Theta_m)\mathbf{x}_{t,m}>\mathbf{0}\right]\in\{0,1\}^2.
\end{equation}
Then, in the low-variance regime $\sigma^2\to 0$, the modulation saturates $\Theta_m\to\theta+\alpha_{\operatorname{max}}$ and induces a non-degenerate reorientation of the reflection hyperplane. In particular, for perturbative tokens $\mathbf{x}_{t,m}=\bar{\mathbf{x}}_{m}+\delta_t$, the expected mask disagreement satisfies
\begin{align}\nonumber
\hspace{-1ex}\mathbb{E}_{t\neq t'}&\!\left[\|s(\mathbf{x}_{t,m};\Theta)-s(\mathbf{x}_{t',m};\Theta)\|_0\right]
\\&\ge
c(\bar{\mathbf{x}}_{m},\Theta)\cdot
\mathbb{E}_{t\neq t'}\!\left[\|\mathbf{x}_{t,m}-\mathbf{x}_{t',m}\|_2\right],
\end{align}
for some $c(\bar{\mathbf{x}},\Theta_m)>0$ whenever $\mathbf{H}(\Theta_m)\bar{\mathbf{x}}$ lies within a bounded margin of the ReLU boundary.
\end{theorem}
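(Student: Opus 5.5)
The proof has two parts: the saturation claim, and then a geometric estimate near the ReLU boundary.

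\textbf{Saturation.} As $\sigma^2\to 0$ we have $1/(\sigma^2+\varepsilon)\to 1/\varepsilon$, which is large for small $\varepsilon$. Hence $\mathrm{sigmoid}(1/(\sigma^2+\varepsilon))\to 1$ up to an error of order $e^{-1/\varepsilon}$, so $\Delta\alpha\to\alpha_{\max}$ and $\Theta_m\to\theta+\alpha_{\max}$. From the closed form of the 2D Householder matrix, $\mathbf{H}(\Theta)$ is reflection across the line at angle $\Theta$. Replacing $\Theta$ by $\Theta+\alpha_{\max}$ composes the original reflection with a rotation by $2\alpha_{\max}$, since $\mathbf{H}(\Theta+\alpha)\mathbf{H}(\Theta)$ is rotation by $2\alpha$. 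For $\alpha_{\max}\in(0,\pi/2)$ this rotation is not the identity. For $\alpha_{\max}=\pi/2$ it is rotation by $\pi$, which still flips the sign pattern. In either case the preimages of the two coordinate boundaries, $\{\mathbf{x}:\mathbf{e}_i^\top\mathbf{H}(\Theta)\mathbf{x}=0\}$, move. This is the ``non-degenerate reorientation.''

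\textbf{Disagreement bound.} Write the two mask coordinates as $s_i(\mathbf{x})=\mathbb{I}[\mathbf{r}_i^\top\mathbf{x}>0]$, where $\mathbf{r}_i=\mathbf{H}(\Theta)^\top\mathbf{e}_i$ is a unit vector. The reflected mean $\mathbf{H}(\Theta)\bar{\mathbf{x}}$ lies within margin $\gamma$ of the ReLU boundary, so some index $i$ has $|\mathbf{r}_i^\top\bar{\mathbf{x}}|\le\gamma$. The masks of tokens $t$ and $t'$ disagree in coordinate $i$ exactly when $\mathbf{r}_i^\top\bar{\mathbf{x}}+\mathbf{r}_i^\top\delta_t$ and $\mathbf{r}_i^\top\bar{\mathbf{x}}+\mathbf{r}_i^\top\delta_{t'}$ have opposite signs. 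Therefore
\[
\|s(\mathbf{x}_t)-s(\mathbf{x}_{t'})\|_0\;\ge\;\mathbb{I}\big[\text{$\mathbf{r}_i^\top\mathbf{x}_t$ and $\mathbf{r}_i^\top\mathbf{x}_{t'}$ are separated by $0$}\big].
\]

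To convert this indicator into a lower bound linear in $\|\mathbf{x}_t-\mathbf{x}_{t'}\|_2$, I would model the perturbations $\delta_t$ as i.i.d., zero mean, isotropic, with a density bounded below on a ball of radius $\rho\gtrsim\gamma$. That is the natural reading of ``perturbative tokens''; the statement does not specify it, so it is an added assumption. Under this model, project onto $\mathbf{r}_i$: the scalars $a_t=\mathbf{r}_i^\top\mathbf{x}_t$ have a density bounded below by some $p_{\min}>0$ on an interval containing $0$. For two independent draws, the probability that they straddle $0$ is at least a constant $\kappa(\gamma,\rho)>0$. Meanwhile $\mathbb{E}\|\mathbf{x}_t-\mathbf{x}_{t'}\|_2\le 2\,\mathbb{E}\|\delta\|\le 2\rho$. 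It follows that
\[
\mathbb{E}\|s_t-s_{t'}\|_0\;\ge\;\kappa\;\ge\;\frac{\kappa}{2\rho}\,\mathbb{E}\|\mathbf{x}_t-\mathbf{x}_{t'}\|_2,
\]
so $c=\kappa/(2\rho)$, which depends on $\bar{\mathbf{x}}$ and $\Theta$ only through the margin $\gamma=|\mathbf{r}_i^\top\bar{\mathbf{x}}|$.

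\textbf{Role of the modulation.} The margin condition is exactly what the modulation provides in the collapsed regime. With the static angle $\theta$, the mean $\bar{\mathbf{x}}$ can sit deep inside one orthant, far from both boundaries. The rotation by $2\alpha_{\max}$ can move it near a boundary, and the theorem states this as a hypothesis rather than proving it. I would remark that for any fixed $\bar{\mathbf{x}}$, the set of $\Theta$ satisfying the condition is a union of arcs of positive measure.

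\textbf{Main obstacle.} The statement fixes neither a distribution on $\delta_t$ nor the scaling of $c$. Adversarial deterministic perturbations, for example all $\delta_t$ parallel to the boundary, make the left-hand side zero. So the genuine difficulty is choosing a distributional assumption under which $c>0$ holds uniformly. If the isotropic bounded-density model is judged too strong, I would fall back to a weaker condition: that $\delta_t$ has nondegenerate spread along the normal $\mathbf{r}_i$.
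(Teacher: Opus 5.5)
Your argument follows the same route as the paper's proof, which is only a sketch. The sketch argues that the sigmoid saturates, and that mask disagreement comes from tokens straddling the coordinate hyperplanes, governed by the margin of $\mathbf{H}(\Theta_m)\bar{\mathbf{x}}$. You make this quantitative, and two of your departures correct the sketch.

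First, for fixed $\varepsilon>0$ one has $1/(\sigma^2+\varepsilon)\to 1/\varepsilon$, not $\infty$, so saturation holds only up to $\alpha_{\max}\,\mathrm{sigmoid}(-1/\varepsilon)$.

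Second, the sketch never derives the linear inequality, and the inequality fails without a spread hypothesis. For example, if $|\mathbf{r}_j^\top\bar{\mathbf{x}}|>\max_t\|\delta_t\|_2$ for both $j$, the left side is $0$ while the right side is positive. So $c$ must depend on the perturbations, not only on $(\bar{\mathbf{x}},\Theta)$.

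Because $c$ is existential, you can drop the i.i.d. model altogether. Suppose $n_+\ge 1$ tokens satisfy $\mathbf{r}_i^\top\mathbf{x}_t>0$ and $n_-\ge 1$ satisfy $\mathbf{r}_i^\top\mathbf{x}_t\le 0$. Averaging over ordered pairs, the left side is at least $2n_+n_-/(L(L-1))$ and the right side is at most $2\max_t\|\delta_t\|_2$. Hence $c=n_+n_-/(L(L-1)\max_t\|\delta_t\|_2)$ works. This also fixes a mismatch in your version: you bound expectations over random draws, whereas $\mathbb{E}_{t\neq t'}$ in the statement is an average over the fixed token set. If you keep the random model, you need two repairs. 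Require $\rho>\gamma$ strictly. Also require $\delta$ to be supported in the ball of radius $\rho$, since a density lower bound on the ball does not give $\mathbb{E}\|\delta\|_2\le\rho$.

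One of your claims is false. For $\alpha_{\max}=\pi/2$, rotation by $\pi$ is $-\mathbf{I}$, so $\mathbf{H}(\theta+\pi/2)=-\mathbf{H}(\theta)$. The lines $\{\mathbf{x}:\mathbf{e}_i^\top\mathbf{H}\mathbf{x}=0\}$ therefore do not move at all; only the active side flips. With your $e^{-1/\varepsilon}$ correction they move by a negligible angle.

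Moreover, two quantities are invariant (up to ties) under $\mathbf{H}\mapsto-\mathbf{H}$ and under left-multiplication of $\mathbf{H}$ by the rotation by $\pi/2$, which permutes the coordinates up to sign:
\begin{itemize}
\item the mask disagreement $\|s(\mathbf{x}_t)-s(\mathbf{x}_{t'})\|_0$;
\item the margin $\min_i|(\mathbf{H}\bar{\mathbf{x}})_i|$.
\end{itemize}
Consequently, saturated modulation with $\alpha_{\max}\in\{\pi/4,\pi/2\}$ gives exactly the same mask disagreement and margin as the static angle $\theta$. This includes the paper's default $\alpha_{\max}=\pi/2$.

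Your remark that the rotation ``can move $\bar{\mathbf{x}}$ near a boundary'' thus holds only when $2\alpha_{\max}\notin\frac{\pi}{2}\mathbb{Z}$, e.g. $\alpha_{\max}=\pi/8$. This does not damage your inequality, because the margin enters only as a hypothesis. But ``non-degenerate reorientation'' should be read as a statement about the mirror line, which does rotate by $\alpha_{\max}$, not about the preimages of the ReLU boundary. You should not assert that those preimages move for $\alpha_{\max}=\pi/2$.
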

% Theorem~\ref{thm:variance_shift}.

\noindent\textit{Proof sketch.}
As $\sigma^2\to 0$, we have $1/(\sigma^2+\varepsilon)\to\infty$, hence $\Delta\alpha(\sigma^2)$ saturates near $\alpha_{\operatorname{max}}$, yielding the maximal angular shift. Mask disagreement arises when tokens fall on different sides of the coordinate hyperplanes after reflection, which is controlled by the distance (margin) of $\mathbf{H}(\Theta_m)\bar{\mathbf{x}}$ to the ReLU boundary. By rotating the reflection hyperplane in the low-variance regime, the proposed modulation prevents an entire oversmoothed token cluster from sharing an identical activation pattern, thereby mitigating collapse under long-context accumulation.\hfill$\square$

\subsection{Cross-Head Householder Reflection}
\label{sec:cross_head}
While multi-head factorization improves expressivity, purely head-wise processing implicitly limits interactions across heads and can lead to fragmented representations, especially under non-negative feature maps. Recent studies on \emph{inter-head communication} \citep{knockinghead,hybridLA} suggest that lightweight transformations along the head dimension $H$ can substantially enhance attention capacity by allowing heads to exchange complementary information. Motivated by this perspective, we introduce a global cross-head Householder reflection that couples the full feature space \emph{before} head-wise decomposition, enabling global correlation mixing with stable conditioning.

Formally, let $\mathbf{X} \in \mathbb{R}^{L \times HD}$ denote the concatenated multi-head features, where $H$ is the number of heads and $D$ is the per-head dimension. We apply a global Householder reflection
\begin{equation}
\phi_c(\mathbf{X}) =\mathbf{X} \mathbf{H}_c,
\quad
\mathbf{H}_c = \mathbf{I}_{HD} - 2 \frac{\mathbf{u}_c \mathbf{u}_c^\top}{\|\mathbf{u}_c\|_2^2},
\end{equation}
where $\mathbf{u}_c \in \mathbb{R}^{HD}$. Importantly, the transformation preserves the Euclidean geometry while introducing controlled cross-head mixing.

\paragraph{Covariance Mixing.}
Let $\Sigma=\mathrm{Cov}(\mathbf{X})\in\mathbb{R}^{HD\times HD}$ denote the covariance of the concatenated head features at a token. Since $\mathbf{H}_c$ is orthogonal, the reflected covariance satisfies,
\begin{equation}
\Sigma' \;=\; \mathrm{Cov}(x \mathbf{H}_c) \;=\; \mathbf{H}_c^\top \Sigma \mathbf{H}_c,
\end{equation}
which preserves the covariance spectrum but generally redistributes correlation mass across head blocks. As a result, $\phi_c(\cdot)$ mitigates the block-diagonal bias induced by independent head processing, allowing low-variance (``dead'') heads to recover discriminative directions from high-variance heads prior to the subsequent head-wise non-negative mapping. After the cross-head reflection, $\phi_c(\mathbf{X})$ is reshaped back into $H$ heads and processed with the variance-aware block-wise Householder reflections from \S\ref{sec:variance}, followed by $\mathrm{ReLU}$ per head. The complete MirrorLA pipeline is summarized in Algorithm~\ref{algo:code}.

\begin{algorithm}[h]
\caption{Reflecting Feature Map}
\label{algo:code}
\definecolor{codeblue}{rgb}{0.25,0.5,0.5}
\lstset{
    backgroundcolor=\color{white},
    basicstyle=\fontsize{8.2pt}{8.6pt}\ttfamily\selectfont,
    columns=fullflexible,
    breaklines=true,
    captionpos=b,
    commentstyle=\fontsize{7.6pt}{7.6pt}\color{codeblue},
    keywordstyle=\fontsize{8.2pt}{8.2pt},
}
\begin{lstlisting}[language=python]
# x: input features [B, N, D]
# H: number of heads, d = D / H

# ----- Global reflection -----
u_c = normalize(randn(D))
x = x - 2 * u_c * <x, u_c>
# ----- Head-wise reshape -----
x = reshape(x, [B, H, N, d])
# ----- Block partition -----
x_blk = reshape(x, [B, H, N, d//2, 2])
# ----- Variance-aware angle modulation -----
theta = (randn(H, d//2)
var = Var(x_blk over N and 2D)
delta = sigmoid(lambda / (var + epsilon)) * alpha_max
theta = theta + delta
# ----- Block-wise reflection -----
u = [cos (theta), sin (theta)]
x_blk = x_blk - 2 * u * <x_blk, u>
# ----- Restore shape -----
x = reshape(x_blk, [B, H, N, D])
return ReLU(x)
\end{lstlisting}
\end{algorithm}

\section{Experiments}
In this section, we conduct extensive experiments to evaluate the effectiveness of MirrorLA including image classification, object detection and instance segmentation, diffusion transformer and high-resolution dense prediction tasks such as semantic segmentation and super-resolution.

\subsection{Image Classification on ImageNet-1K}
\noindent\textbf{Settings.}
On ImageNet-1K \citep{imagenet}, we evaluate MirrorLA by training from scratch and reporting Top-1 accuracy.
To provide a comprehensive evaluation, we compare our method against various State-of-the-Art (SOTA) efficient vision models. 
For a fair and structured analysis, these baselines are categorized into several tiers based on parameter count, allowing for direct performance comparisons within similar model scales.
% \input{tables/classification}
% Two-column layout inside a single-column document:
% left: ImageNet-1K classification; right: COCO + Segmentation
\begin{table}[!t]
    \centering

    \begin{minipage}[t]{0.49\linewidth}
        \centering
        \captionof{table}{\textbf{Comparison of the ImageNet-1K classification with the SOTA efficient models.} The ``\textsc{Params}'' column represents the number of parameters, ``\textsc{Acc}'' for Top-1 accuracy.}
        \label{tab:imagenet}
        \setlength{\tabcolsep}{2pt}
        \resizebox{\linewidth}{!}{
            \begin{tabular}{l|cc|c}
                \toprule[2pt]
                \textsc{Model} & \textsc{Params} & \textsc{FLOPs} & \textsc{Acc.}\\
                \midrule[1pt]
                Conv2Former-N \citep{conv2f}   &15M    &2.2G   &81.5 \\
                RMT-T \citep{rmt}  &14M & 2.5G & 82.4 \\
                Agent-PVT-T \citep{agentattn} &12M    &2.0G   &78.4 \\
                % EfficientViT-B2 \citep{efficientvit}&24M&1.6G  &82.1\\
                RMT-T \citep{rmt}&14M&2.5G  &82.4 \\
                RAVLT-T \citep{rala}     &15M    &2.6G   &82.6 \\
                % NaLaFormer-T \citep{nalaformer}  & 16M &2.7G  &82.6 \\
                \rowcolor{blue!10}
                MirrorLA-T & 15M & 2.6G &\textbf{82.8}\\

                \midrule[1pt]
                Conv2Former-T \citep{conv2f}     &27M    &4.4G   &83.2 \\
                MambaOut-T \citep{mambaout}     &27M    &4.5G   &82.7 \\
                % MogaNet-S \citep{moganet}     &25M    &5.0G   & 83.4 \\
                VMamba-T \citep{vmamba}     &30M    &4.9G   &82.6 \\
                % InternImage-T \citep{internimage}   &30M    &5.0G   &83.5 \\
                % EfficientViT-B3 \citep{efficientvit}&49M&4.0G  &82.1\\
                % FL-Swin-S \citep{flatten} &51M    &8.7G   &83.5 \\
                Agent-Swin-T \citep{agentattn}    &29M    &4.5G   &82.6 \\
                Vim-S \citep{vim}  &26M    &3.7G   &80.6 \\
                VMamba-T \citep{vmamba}   &30M    &4.9G   &82.6 \\
                LocalVMamba-T \citep{localvmamba} &26M    &5.7G   &82.7 \\
                % CSwin-T \citep{cswin}      &23M    &4.3G   &82.7 \\
                % SG-Former-S \citep{sgformer}    &23M    &4.8G   &83.2 \\
                MOAT-0 \citep{moat}       &28M    &5.7G   &83.3 \\
                % RMT-S \citep{rmt}&27M & 4.5G & 84.1 \\
                Pola-Swin-T \citep{polaformer}   &29M    &4.5G   &82.6 \\
                ViG-H-T \citep{vig}  &29M    &4.5G   &82.8 \\
                MILA-T \citep{mlla}  &25M    &4.2G   &83.5 \\
                \rowcolor{blue!10}
                MirrorLA-S & 26M & 4.9G  &\textbf{84.2}\\

                \midrule[1pt]
                MambaOut-S \citep{mambaout}   &49M    &9.0G   &84.1 \\
                MogaNet-B \citep{moganet}   &44M    &10G   &84.3 \\
                VMamba-S \citep{vmamba}  &50M    &8.7G   &83.6 \\
                StructViT-B-8-1 \citep{structvit}  &52M    &12G   &84.3 \\
                SOFT-L \citep{SOFT}  &64M    &11G   &83.1 \\
                FL-Swin-S \citep{flatten}  &51M    &8.7G   &83.5\\
                Agent-Swin-S \citep{agentattn}   &50M    &8.7G   &83.7 \\
                Pola-Swin-S \citep{polaformer} &50M    &8.7G   &83.6 \\
                ViG-H-S \citep{vig}   &50M    &8.8G   &83.8 \\
                RMT-B \citep{rmt}&54M & 9.7G & 85.0 \\
                MILA-S \citep{mlla}    &43M    &7.3G   &84.4 \\
                RAVLT-B \citep{rala}    &48M    &10.6G   & 85.1\\
                % NalaFormer-B \citep{nalformer}  & 52M  & 12G  & 85.2\\
                \rowcolor{blue!10}
                MirrorLA-B   & 48M  & 10.6G & \textbf{85.3}\\

                \midrule[1pt]
                % InterImage-B \citep{internimage}  &97M    &16G   &84.9 \\
                MambaOut-B \citep{mambaout}    &85M    &15.8G   &84.2 \\
                VMamba-B \citep{vmamba}   &89M    &15.4G   &83.9 \\
                % SG-Former-B \citep{sgformer}   &78M    &16G  &84.7 \\
                FL-Swin-B \citep{flatten}   &89M    &15.4G   &83.8 \\
                Agent-Swin-B \citep{agentattn} &88M    &15.4G   &84.0 \\
                Pola-Swin-B \citep{polaformer}    &88M    &15.4G   &83.8 \\
                SMT-L \citep{smt}       &81M&   17.7G     &84.6 \\
                VRWKV-B \citep{vrwkv}    &94M    &18.2G   &82.0 \\
                ViG-H-B\citep{vig}    &89M    &15.5G   &84.2 \\
                MaxViT-B\citep{maxvit}    &120M    &23.4G   &84.9 \\
                GC-ViT-B\citep{gcvit}   &90M    &14.8G   &85.0 \\
                RMT-L \citep{rmt}  &95M & 18.2G & 85.5 \\
                InLine-Swin-B \citep{inline}  &88M    &15.4G   &82.0 \\
                MILA-B \citep{mlla} &96M    &16.2G   &85.3 \\
                % ViG-H-B \citep{vig} &89M    &16G   &84.2 \\
                RAVLT-L \citep{rala}    &95M    &17.3G   &85.5 \\
                \rowcolor{blue!10}
                MirrorLA-L  & 95M  & 17.3G & \textbf{85.7}\\
                \bottomrule[2pt]
            \end{tabular}
        }
    \end{minipage}
    \hfill
    \begin{minipage}[t]{0.49\linewidth}
        \centering
        \captionof{table}{\textbf{Object detection and instance segmentation results on the COCO dataset.} The ``P'' column represents the number of parameters, and ``F'' for FLOPs.}
        \setlength{\tabcolsep}{3pt}
        \label{tab:COCO}
        \resizebox{\linewidth}{!}{
            \begin{tabular}{l|cc|cccccc}
                \toprule[2pt]
                \multirow{2}{*}{\textsc{Model}}& \textsc{P}&\textsc{F}&\multicolumn{6}{c}{\textsc{RetinaNet 1$\times$}}\\
                   &(M)&(G)& \textsc{AP}$^{b}$ & \textsc{AP}$^{b}_{50}$ & \textsc{AP}$^{b}_{75}$ & \textsc{AP}$^{b}_{S}$ & \textsc{AP}$^{b}_{M}$ & \textsc{AP}$^{b}_{L}$\\
                \midrule

                {PVT-T\citep{pvtv2}}&23&221& 39.4 & 59.8 & 42.0 & 25.5 & 42.0 & 52.1\\

                {MPViT-XS\citep{mpvit}}& 20&211&43.8& 65.0 &47.1& 28.1& 47.6 &56.5\\

                {SOFT++  T\citep{SOFT}} & 23&200& 41.9 & 62.7 & 44.7 & 27.8 & 45.4 & 55.6\\
                {RMT-T\citep{rmt}} &23 &199& 45.1 &66.2 &48.1 &28.8& 48.9 &61.1 \\
                {RAVLT-T\citep{rala}}& 24& 201 &44.1& 64.3 &47.4& 26.4&48.1&59.3  \\
                \rowcolor{blue!10}
                MirrorLA-T &24&200 & \textbf{46.1} & \textbf{67.4} & \textbf{49.4} & \textbf{28.8} & \textbf{50.4} & \textbf{61.7}\\
                \midrule
                {Swin-T\citep{swin}}&38 &248& 41.7& 63.1& 44.3 &27.0 &45.3 &54.7\\
                {MPViT-S\citep{mpvit}}&32&248&45.7& 57.3& 48.8& 28.7 &49.7& 59.2\\
                {PVTv2-B2\citep{pvtv2}}&35&290&44.6& 65.6 &47.6& 27.4& 48.8 &58.6\\
                {CMT-S\citep{cmt}}&44&231&44.3 & 65.5 & 47.5 & 27.1 & 48.3 & 59.1\\

                {RAVLT-S\citep{rala}}& 44& 262 &46.7& 67.2 &50.4& 31.4&51.1&62.3  \\\rowcolor{blue!10}
                {MirrorLA-S}&34&243& \textbf{48.5} & \textbf{69.8} & \textbf{52.2} & \textbf{32.2} & \textbf{52.9} & \textbf{63.7}  \\

                \midrule[1pt]

                \multirow{2}{*}{\textsc{Model}}& \textsc{P}&\textsc{F}& \multicolumn{6}{c}{\textsc{Mask R-CNN 1$\times$}}\\
                   &(M)&(G)& \textsc{AP}$^{b}$ & \textsc{AP}$^{b}_{50}$ & \textsc{AP}$^{b}_{75}$ & \textsc{AP}$^{m}$ & \textsc{AP}$^{m}_{50}$ & \textsc{AP}$^{m}_{75}$ \\
                \midrule

                {PVTv2-b1\citep{pvtv2}}&33&243&41.8& 54.3 &45.9 &38.8& 61.2& 41.6 \\

                {MPViT-XS\citep{mpvit}}&30&231&47.3 &69.1 &51.9 &42.7& 66.2& 46.0\\

                {FL-PVT-T\citep{flatten}}&32&244&  38.2 & 61.6 & 41.9 & 37.0 & 57.6 & 39.0\\
                {RAVLT-T\citep{rala}}& 33& 219 &47.2& 69.1 &51.7& 42.7&66.0&46.0  \\
                % {MAVLT-T\citep{mala}}& 33& 226 &47.5& 69.0 &52.3& 43.0&66.3&46.3  \\
                \rowcolor{blue!10}
                {MirrorLA-T}&33&219& \textbf{47.7} & \textbf{69.6} & \textbf{52.6} & \textbf{43.0} & \textbf{66.6} & \textbf{46.2} \\
                \midrule

                {InternImage-T\citep{internimage}}& 49&270&47.2& 69.0 &52.1 &42.5 &66.1& 45.8  \\

                {MambaOut-T\citep{internimage}}&43&262&45.1& 67.3& 49.6& 41.0 &64.1& 44.1  \\

                {MPViT-S\citep{mpvit}}&43&268& 46.4& 68.6& 51.2 &42.4 &65.6 &45.7 \\

                {CMT-S\citep{cmt}}& 45&249&44.6 &66.8 &48.9 &40.7& 63.9 &43.4 \\

                {MILA-T\citep{mlla}}&44&262& 46.8 & 69.5 & 51.5& 42.1& 66.4 &45.0\\

                \rowcolor{blue!10}
                {MirrorLA-S}&44&262& \textbf{49.9} & \textbf{71.6} & \textbf{54.7} & \textbf{44.5} & \textbf{68.4} & \textbf{48.1}\\

                \midrule[1pt]

                \multirow{2}{*}{\textsc{Model}}& \textsc{P}&\textsc{F} &  \multicolumn{6}{c}{\textsc{Mask R-CNN 3$\times$}}\\
                &(M)&(G)& \textsc{AP}$^{b}$ & \textsc{AP}$^{b}_{50}$ & \textsc{AP}$^{b}_{75}$ & \textsc{AP}$^{m}$ & \textsc{AP}$^{m}_{50}$ & \textsc{AP}$^{m}_{75}$  \\

                \midrule

                {XCiT-T12/8\citep{xcit}}&26&266& 44.5 &66.4 &48.8 &40.4& 63.5 &43.3  \\
                {RMT-T\citep{rmt}} &33& 218& 47.1 &68.8 &51.7 &42.6 &65.8 &45.9 \\
                {MPViT-T\citep{mpvit}}&28&216& 44.8 & 66.9 & 49.2 & 41.0 & 64.2 & 44.1\\
                \rowcolor{blue!10}
                {MirrorLA-T}&33&129& \textbf{49.4} & \textbf{70.7} & \textbf{54.3} & \textbf{44.1} & \textbf{67.7} & \textbf{47.7}  \\
                \midrule

                {InternImage-T\citep{internimage}} &49&270& 49.1 &70.4& 54.1& 43.7 &67.3& 47.3  \\

                {RMT-S\citep{rmt}} &46&262&50.7& 71.9 &55.6& 44.9& 69.1 &48.4 \\
                {FL-Swin-T\citep{flatten}} &49&268&46.5& 68.5 &50.8 &42.1& 65.4 & 45.1 \\
                {VMamba-T\citep{vmamba}} &50&270&48.9 &70.6& 53.6 &43.7 &67.7 &46.8 \\
                {MILA-T\citep{mlla}} &44&255&48.8& 71.0 &53.6& 43.8& 68.0 &46.8 \\

                \rowcolor{blue!10}
                {MirrorLA-S}&44&262& \textbf{51.3} & \textbf{72.3} & \textbf{56.4} & \textbf{45.4} & \textbf{69.6} & \textbf{49.0} \\

                \bottomrule[2pt]
            \end{tabular}
        }

        % \vspace{0.75ex}

        \captionof{table}{\textbf{Semantic segmentation results compared with SOTA methods on ADE20K and Cityscapes benchmarks.} FLOPs is calculated with input size 512$\times$512 for ADE20K and 2,048$\times$1,024 for Cityscapes.}
        \label{tab:seg}
        \setlength{\tabcolsep}{5pt}
        \resizebox{\linewidth}{!}{
            \begin{tabular}{l|c|cc|cc}
                \toprule[2pt]
                \multirow{2}{*}{\textsc{Method}} & \multirow{2}{*}{\textsc{Para}}&\multicolumn{2}{c|}{\textsc{ADE20K}}&\multicolumn{2}{c}{\textsc{CityScapes}}\\
                 & &\textsc{mIoU}&\textsc{FLOPs}&\textsc{mIoU}&\textsc{FLOPs}\\

                 \midrule
                 VWFormer-B1 \citep{vwformer}&14M &44.0&13G&80.4&-\\
                 EfficientViT-B2 \citep{efficientvit}& 15M& 45.9 &9.1G&82.1&74G\\
                 SegFormer-B1 \citep{segformer}&14M&42.2&16G &78.5&244G\\
                 SegNeXt-S \citep{segformer} &15M& 44.3& 16G&81.3&125G\\
                 \rowcolor{blue!10}
                 MirrorLA-T& 14M & \textbf{48.8} &14G & \textbf{82.5} & 110G\\

                 \midrule
                 VRWKV-S \citep{vrwkv}&29M&47.2&46G&-&-\\
                 MambaOut-T \citep{mambaout}&54M&47.4&-&-&-\\
                 VWFormer-B2 \citep{vwformer}&27M &48.1&47G&81.7&415G\\
                 EfficientViT-B3 \citep{efficientvit}& 40M& 49.0 &22G&83.0&179G\\
                 SegNeXt-B \citep{segnext} &28M &48.5& 35G&82.6 &276G\\
                 EfficientViT-L1\citep{efficientvit}& 40M& 49.2&36G& 82.7  &282G\\
                 SegFormer-B2 \citep{segformer} &28M& 46.5 &62G&81.0&711G\\
                 \rowcolor{blue!10}
                 MirrorLA-S& 24M & \textbf{50.9} & 25G  & \textbf{83.5} &  205G \\
                \bottomrule[2pt]
            \end{tabular}
        }
    \end{minipage}

\end{table}

\noindent\textbf{Results.}
The results in Tab.~\ref{tab:imagenet} demonstrate that MirrorLA consistently delivers strong and scalable performance across model sizes.
At the tiny scale, MirrorLA-T achieves 82.8\% Top-1 accuracy, improving upon RAVLT-T by 0.2\% and RMT-T by 0.4\%, indicating clear gains even in low-capacity regimes.
As model capacity increases, the performance advantage becomes more stable. MirrorLA-S reaches 84.2\%, surpassing RMT-S by 0.1\%, while MirrorLA-B further improves to 85.3\%, exceeding RAVLT-B by 0.2\%.
At the largest scale, MirrorLA-L attains 85.7\% Top-1 accuracy, outperforming both RMT-L and RAVLT-L, which highlights the effectiveness of MirrorLA in leveraging increased model capacity.
Overall, these results suggest that MirrorLA not only provides consistent improvements over prior methods but also scales favorably across different model sizes.

\subsection{Object Detection and Instance Segmentation}
\noindent\textbf{Settings.}
We evaluate MirrorLA on COCO \citep{coco} dataset for object detection and instance segmentation. By integrating MirrorLA into RetinaNet \citep{retinanet} and Mask R-CNN \citep{mrcnn} under both 1$\times$ and 3$\times$ training schedules. For fair comparison, we adopt the evaluation protocol of FL-Transformer \citep{flatten}.

\noindent\textbf{Results.}
As shown in Tab.~\ref{tab:COCO}, MirrorLA consistently improves detection and instance segmentation performance under comparable computational budgets.
Specifically, MirrorLA yields up to 1.8 $\textsc{AP}^b$ gains for RetinaNet, while for Mask R-CNN it improves $\textsc{AP}^b$ from 47.2 to 49.9 under the 1$\times$ schedule and from 50.7 to 51.3 under the 3$\times$ schedule.

\begin{figure*}[h]
    \centering
    \includegraphics[width=1\linewidth]{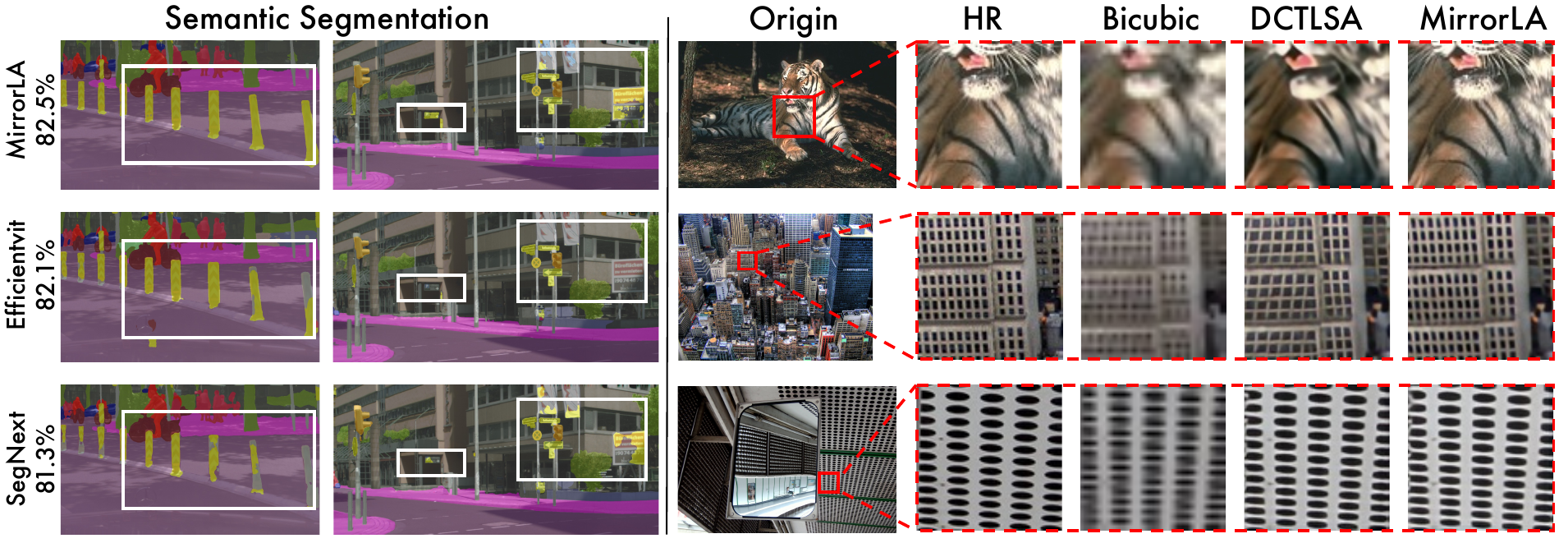}
    \caption{Visualization of Semantic Segmentation and Super-Resolution (SR) tasks. Left: Comparative results on the Cityscapes dataset, where MirrorLA achieves superior segmentation integrity, whereas competing methods suffer from incomplete masks. Right: Comparison of SR performance; MirrorLA reconstructs details more effectively, while DCTLSA introduces noticeable structural distortions.}
    \label{fig:segsr}
\end{figure*}

\subsection{Semantic Segmentation}
\noindent\textbf{Settings.}
We integrate MirrorLA into semantic segmentation on ADE20K \citep{ade20k} and Cityscapes \citep{cityscapes}, using mIoU as the evaluation metric. We initialize the backbone with ImageNet-1K pretrained weights and train the segmentation models using the \textit{mmcv-segmentation} framework \citep{mmcv}, following common practice in recent works \citep{segnext,efficientvit}.
Additionally, we adopt an FPN decoder with other linear attention methods in Appendix Tab.~\ref{tab:fpnseg}.

\noindent\textbf{Results.}
MirrorLA consistently raises mIoU under similar computational budgets on both datasets.
As shown in Tab.~\ref{tab:seg}, MirrorLA delivers stronger segmentation accuracy with competitive complexity. On ADE20K, MirrorLA-T and MirrorLA-S obtain 48.8 and 50.9 mIoU, outperforming EfficientViT-B2 \citep{efficientvit} at 45.9 and EfficientViT-L1 \citep{efficientvit} at 49.2. On Cityscapes, MirrorLA-T reaches 82.5 mIoU and MirrorLA-S further improves to 83.5, exceeding SegNeXt-S \citep{segnext} at 81.3 and EfficientViT-B3 \citep{efficientvit} at 83.0.
As shown in Fig.~\ref{fig:segsr} (left), MirrorLA produces noticeably cleaner and more precise semantic boundaries compared to EfficientViT and SegNext. It better separates adjacent objects, accurately captures small details such as traffic signs, and reduces misclassified or missing regions, leading to improved overall segmentation quality.

\subsection{Super Resolution}
\noindent\textbf{Settings.}
We evaluate MirrorLA on lightweight single-image super-resolution by integrating it into an efficient SR baseline, DCTLSA \citep{dctlsa}. Following common practice, we consider $\times 3$ and $\times 4$ upscaling and report PSNR/SSIM on standard SR benchmarks. We additionally measure inference latency on an RTX 3090 to assess efficiency.

\noindent\textbf{Results.}
Tab.~\ref{tab:lightSR} shows that integrating MirrorLA into DCTLSA consistently improves PSNR/SSIM for both $\times 3$ and $\times 4$ upscaling. For example, at $\times 4$, Set5 improves from 32.44/0.8973 to 32.53/0.8983, and Urban100 from 26.41/0.7944 to 26.65/0.8007. In addition, DCTLSA-MirrorLA significantly reduces latency and memory: on Manga109 $\times 4$, latency is reduced by over 75\% and memory usage is cut by over 80\%. As visualized in Fig.~\ref{fig:segsr} (right), MirrorLA produces super-resolved images closer to the ground truth, with clear details and better structural preservation.

\begin{table*}[h]
    \centering
    \caption{\textbf{Quantitative comparisons between MirrorLA and other lightweight image super-resolution methods.} \textsc{MEM.} denotes GPU memory consumption, and \textsc{LAT.} denotes latency (per image).}
    \label{tab:lightSR}

    \resizebox{1\linewidth}{!}{
        \begin{tabular}{l|c|cc|cc|cc|cc|cc}
        \toprule[2pt]
            \multirow{2}{*}{\textsc{Model}} &
            \multirow{2}{*}{\textsc{Scale}}&
            \multicolumn{2}{c|}{\textsc{Set5}} &
          \multicolumn{2}{c|}{\textsc{Set14}} &
          \multicolumn{2}{c|}{\textsc{BSDS100}} &
          \multicolumn{2}{c|}{\textsc{Urban100}}&
          \multicolumn{2}{c}{\textsc{Manga109}}
          \\
        &
        & {\textsc{PSNR}}  
        & {\textsc{SSIM}}  
        & {\textsc{PSNR}}  
        & {\textsc{SSIM}}  
        & {\textsc{PSNR}}  
        & {\textsc{SSIM}}  
        & {\textsc{PSNR}}  
        & {\textsc{SSIM}}  
        & {\textsc{PSNR}}  
        & {\textsc{SSIM}}  
        \\ 
        \midrule
        Bicubic & {$\times$4}
        & 28.42
        & 0.8104
        & 26.00
        & 0.7027
        & 25.96
        & 0.6675
        & 23.14
        & 0.6577
        & 24.89
        & 0.7866
        \\
        
        SwinIR~\citep{swinir}  & {$\times$4}
        & 32.44
        & 0.8976
        & 28.77
        & 0.7858
        & 27.69
        & 0.7406
        & 26.47
        & 0.7980
        & 30.92
        & 0.9151
        \\
        DCTLSA~\citep{dctlsa}  & {$\times$4}
        & 32.44
        & 0.8973
        & 28.77
        & 0.7846
        & 27.67
        & 0.7386
        & 26.41
        & 0.7944
        & 31.04
        & 0.9138
        \\
        \rowcolor{blue!10} 
        DCTLSA-MirrorLA  & {$\times$4}
        & \textbf{32.53}
        & \textbf{0.8983}
        & \textbf{28.84}
        & \textbf{0.7867}
        & \textbf{27.74}
        & \textbf{0.7409}
        & \textbf{26.65}
        & \textbf{0.8007}
        & \textbf{31.30}
        & \textbf{0.9170}
        \\

        \midrule
        
        Bicubic & {$\times$3}
        & 30.39 
        & 0.8682
        & 27.55 
        & 0.7742
        & 27.21
        & 0.7385
        & 24.46
        & 0.7349
        & 26.95
        & 0.8556
        \\
        SwinIR~\citep{swinir}  & {$\times$3}
        & 34.62
        & 0.9289
        & 30.54
        & 0.8463
        & 29.20
        & 0.8082
        & 28.66
        & 0.8624
        & 33.98
        & 0.9478
        \\
        DCTLSA~\citep{dctlsa}  & {$\times$3}
        & 34.76
        & 0.9296
        & 30.64
        & 0.8474
        & 29.27
        & 0.8093
        & 28.81
        & 0.8674
        & 34.42
        & 0.9492
        \\
        \rowcolor{blue!10} 
        DCTLSA-MirrorLA  & {$\times$3}
        & \textbf{34.84}
        & \textbf{0.9301}
        & \textbf{30.70}
        & \textbf{0.8488}
        & \textbf{29.33}
        & \textbf{0.8106}
        & \textbf{29.03}
        & \textbf{0.8688}
        & \textbf{34.61}
        & \textbf{0.9505}
        \\

        \midrule[1pt]

        \textsc{Efficiency@3090}
        & {\textsc{Scale}}
        & \textsc{Lat.}
        & \textsc{Mem.}
        & \textsc{Lat.}
        & \textsc{Mem.}
        & \textsc{Lat.}
        & \textsc{Mem.}
        & \textsc{Lat.}
        & \textsc{Mem.}
        & \textsc{Lat.}
        & \textsc{Mem.}
        \\ 
        \midrule
        % SwinIR \citep{swinir}     & {\textsc{Latency}}
        % & 72.51ms
        % & 101.34ms
        % & 109.22ms
        % & 166.26ms
        % & 88.81ms
        % & 114.46ms
        % & 354.39ms
        % & 672.67ms
        % & 451.03ms
        % & 855.38ms
        % \\
        DCTLSA-Softmax    & {\textsc{$\times4$}}
        & 57.70ms
        & 1.05GB
        & 109.04ms
        & 1.47GB
        & 72.06ms
        & 0.59GB
        & 349.86ms
        & 4.28GB
        & 456.71ms
        & 3.44GB
        \\

        DCTLSA-MirrorLA    & {\textsc{$\times4$}}
        & 55.28ms
        & 0.18GB
        & 55.29ms
        & 0.30GB
        & 49.62ms
        & 0.11GB
        & 79.03ms
        & 0.85GB
        & 100.69ms
        & 0.67GB
        \\
        \rowcolor{blue!10} 
        ~~~-~\textsc{Save} & {\textsc{$\times4$}}
        & \textbf{4.2\%}
        & \textbf{82.9\%}
        & \textbf{49.3\%}
        & \textbf{79.6\%}
        & \textbf{31.1\%}
        & \textbf{81.4\%}
        & \textbf{77.4\%}
        & \textbf{80.1\%}
        & \textbf{78.0\%}
        & \textbf{80.5\%}
        \\
        \midrule
        DCTLSA-Softmax    & {\textsc{$\times3$}}
        & 57.90ms
        & 1.66GB
        & 113.04ms
        & 2.27GB
        & 76.74ms
        & 0.93GB
        & 355.81ms
        & 7.19GB
        & 571.86ms
        & 5.49GB
        \\

        DCTLSA-MirrorLA    & {\textsc{$\times3$}}
        & 46.11ms
        & 0.33GB
        & 51.82ms
        & 0.51GB
        & 42.70ms
        & 0.19GB
        & 146.27ms
        & 1.52GB
        & 183.47ms
        & 1.20GB
        \\
        \rowcolor{blue!10} 
        ~~~-~\textsc{Save} & {\textsc{$\times3$}}
        & \textbf{20.4\%}
        & \textbf{80.1\%}
        & \textbf{54.2\%}
        & \textbf{77.5\%}
        & \textbf{44.3\%}
        & \textbf{79.6\%}
        & \textbf{58.9\%}
        & \textbf{78.9\%}
        & \textbf{67.9\%}
        & \textbf{78.1\%}
        \\

        % \rowcolor{blue!10} 
        % ~~~-~\textsc{Speedup} & {\textsc{Latency}}
        % & \textbf{23.8\%}
        % & \textbf{54.5\%}
        % & \textbf{49.4\%}
        % & \textbf{74.9\%}
        % & \textbf{44.1\%}
        % & \textbf{62.7\%}
        % & \textbf{77.7\%}
        % & \textbf{78.3\%}
        % & \textbf{77.8\%}
        % & \textbf{78.6\%}
        % \\
        
        \bottomrule[2pt]
        \end{tabular}
    }
\end{table*}

\subsection{Diffusion Transformer}
\noindent\textbf{Settings.}
Diffusion transformers provide a suitable testbed for evaluating linear attention. Following DiT \citep{dit} and SiT \citep{sit}, we replace the attention module with MirrorLA and conduct experiments on ImageNet-1K \citep{imagenet} under the same training and evaluation protocols as the corresponding baselines.

\noindent\textbf{Results.}
Tab.~\ref{tab:dit} reports that integrating MirrorLA consistently improves diffusion transformers across both DiT and SiT architectures.
On the DiT branch, MirrorLADiT yields a sizable improvement over DiG \citep{dig}. Specifically, the FID decreases from 62.06 to 58.07 and the sFID decreases from 11.77 to 11.21, while the IS increases from 22.81 to 24.70.
On the SiT branch, MirrorLASiT achieves the best overall performance. It attains lower FID and sFID values of 52.29 and 8.41, respectively, together with a higher IS of 28.51, and demonstrates strong precision and recall of 0.43 and 0.61. This performance surpasses both SiT \citep{sit} and EfficientSiT \citep{efficientsit}.

\begin{table}[h]
        \centering
        \caption{\textbf{Diffusion transformer results.} MirrorLA achieves outstanding performance on both DiT and SiT.}
        % \setlength{\tabcolsep}{2pt}
        % \resizebox{1\linewidth}{!}{
        \begin{tabular}{l|ccccc}
        \toprule[2pt]
         \textsc{Model}&\textsc{FID} $\downarrow$&\textsc{sFID} $\downarrow$&\textsc{IS} $\uparrow$&\textsc{Precision} $\uparrow$&\textsc{Recall} $\uparrow$ \\
         \midrule
            DiT \citep{dit}   & 68.40&-&-&-&-\\
            DiG \citep{dig}   &62.06&11.77&22.81&0.39&0.56\\
            \rowcolor{blue!10}
            MirrorLADiT &\textbf{58.07}&\textbf{11.21}&\textbf{24.70}&\textbf{0.41}&\textbf{0.59}\\
            \midrule[1pt]
            SiT \citep{sit}   & 58.61&9.25&24.31&0.41&0.59\\
            EfficientSiT \citep{efficientsit} &53.57&9.01&27.26&0.43&0.61\\
            \rowcolor{blue!10}
            MirrorLASiT &\textbf{52.29}&\textbf{8.41}&\textbf{28.51}&\textbf{0.43}&\textbf{0.61} \\
          \bottomrule[2pt]
        \end{tabular}
        % }
        \label{tab:dit}
    \end{table} 

\subsection{Ablation study}
We evaluate MirrorLA on the Swin-T architecture following the protocol of FLatten-Transformer \citep{flatten} and compare it with existing linear attention mechanisms.
As a modular enhancement inserted before the non-negativity constraint, our method surpasses existing linear attention variants with a 0.6\% accuracy gain shown in Appendix Tab. \ref{tab:swin}. 
Additionally, we conduct extensive ablation studies further confirm the necessity of our hierarchical design: on the XT setting, disabling the full variant or removing cross-head reflection leads to performance drops of 0.5\% and 0.3\%, respectively. 
Detailed analyses regarding hyperparameter sensitivity ($\lambda, \alpha_{\max}$) and component-wise performance are provided in the Appendix Tab. \ref{tab:ablation}.

\section{Conclusion}
In this work, we introduce \textbf{MirrorLA}, a linear attention framework that replaces passive non-negativity clipping with active geometric reorientation via learnable Householder reflections. 
At the single-head level, we generalize 2D reflections to high-dimensional feature spaces through block-wise decomposition, and further propose Variance-Aware Angle Modulation to enable data-dependent mirror transformations.
At the global scale, we design a unified cross-head reflection to enhance inter-head communication. 
Extensive experiments demonstrate that MirrorLA consistently achieves superior performance while substantially reducing memory footprint and time.

\clearpage

\bibliographystyle{plainnat}
\setlength{\bibhang}{0pt}
\setlength\bibindent{0pt}
\bibliography{main}

\newpage
\appendix

\section{Appendix}
\begin{itemize}
    % \item \ref{appendix:llmusage} \textbf{LLM Usage Statement.}
    \item \ref{app:modelsettings} \textbf{Model Settings.}
    \item \ref{app:ablation} \textbf{Ablation Study.}
    \item \ref{app:visualizations} \textbf{Visualizations.}
    \item \ref{app:discussion} \textbf{Discussion.} Efficient Implement Discussion.
    % \item \ref{sec:DiT} \textbf{Diffusion Transformer Results.} DiT experiments.
\end{itemize}

\subsection{Model Settings}
Consistent with established works \citep{rala, rmt, swin}, we develope a set of Mirror-LA backbones by only replace the non-negative activations with our reflecting feature map, each with varying configurations of block count and channel dimensions across their respective stages whose the ratio of MLP is set to 3.5. The architecture details are illustrated in the Tab. \ref{tab:modeldetails}.
\label{app:modelsettings}
\begin{table}[h]
\centering
\caption{\textbf{Architecture setting details of MirrorLA.}}
\label{tab:modeldetails}
% \resizebox{1\linewidth}{!}{
\begin{tabular}{c|cc|ccc}
\toprule[2pt]
\textsc{Model} &\textsc{Param}& \textsc{Flops}&  \textsc{Blocks} & \textsc{Channels} & \textsc{Heads}\\
\midrule[1pt]
MirrorLA-XT         &6M &  0.6G       & {[}2, 2, 2, 2{]}           & {[}32, 64, 128, 384{]}      & {[}1, 2, 4, 8{]}           \\
MirrorLA-T         &15M   &  2.6G       & {[}2, 2, 6, 2{]}           & {[}64, 128, 256, 512{]}   &    {[}1, 2, 4, 8{]}           \\
MirrorLA-S   &26M      &  4.9G        & {[}3, 5, 9, 3{]}           & {[}64, 128, 320, 512{]}   &    {[}1, 2, 5, 8{]}          \\
MirrorLA-B    &48M     &  10.6G         & {[}4, 6, 12, 6{]}          & {[}96, 192, 384, 512{]}     &  {[}1, 2, 6, 8{]}         \\
MirrorLA-L   &95M    &  17.3G           & {[}4, 7, 19, 8{]}          & {[}96, 192, 448, 640{]}      & {[}1, 2, 7, 10{]}         \\
\bottomrule[2pt]
\end{tabular}
% }
\end{table}

% \clearpage
\subsection{Ablation Study}
\label{app:ablation}
\noindent\textbf{Comparison with Other Linear Attention.}
To ensure a rigorous and fair comparison, we adhere to the evaluation protocol established by FLatten-Transformer  using the Swin-T architecture. MirrorLA is designed as a modular enhancement: we simply insert our reflecting feature map operation immediately before the non-negativity constraint, while keeping all other components—including specific "spiky" activation functions—entirely unchanged. Results are shown in Tab. \ref{tab:swin}

\begin{table}[h]

\centering
        \caption{\textbf{Comparison with other linear attention models on the Swin-T setting.}}
        % \resizebox{\linewidth}{!}{
        \begin{tabular}{l|ccc}
            \toprule[2pt]
            \textsc{Method} & \textsc{Params} & \textsc{FLOPs} & \textsc{Acc. (\%)} \\
            \midrule
            Swin-T \citep{swin} & 28M & 4.4G & 81.2 \\
            Hydra Attn \citep{hydra} & 29M & 4.5G & 80.7 \\
            Efficient Attn \citep{efficientattn} & 29M & 4.5G & 81.0 \\
            Linear Angular \citep{angularattn} & 29M & 4.5G & 79.4 \\
            FLatten Attn \citep{flatten} & 29M & 4.5G & 82.1 \\
            Agent Attn \citep{agentattn} & 29M & 4.5G & 82.6 \\
            {InLine Attn} \citep{inline} & 30M & 4.5G & 82.4\\
            PolaFormer \citep{polaformer} & 29M & 4.5G & 82.6 \\
            \midrule
            ReLU+LA & 28M & 4.6G  & 81.8 \\
            \rowcolor{blue!10}
            ReLU+MirrorLA & 28M & 4.6G & \textbf{82.4} \\
            \bottomrule[2pt]
        \end{tabular}
        % }
        \label{tab:swin}

\end{table}

\noindent\textbf{Hyperparameters and components Ablations.}
We study the impact of key design choices in MirrorLA on the XT setting. Specifically, we vary the $\lambda$ used in $\operatorname{sigmoid}\left(\frac{\lambda}{\sigma_m^2+\varepsilon}\right)$, the maximum angular perturbation $\alpha_{\max}$ , and whether to enable the full variant and the cross-head reflection. As shown in Tab.~\ref{tab:ablation}, the default setting achieves the best accuracy. Disabling the full variant causes a 0.5 drop, while removing cross-head reflection reduces accuracy by 0.3.

% \clearpage
\begin{table}[h]
    \centering
    \caption{\textbf{Hyperparameter ablations} for MirrorLA.}
    % \resizebox{0.5\linewidth}{!}{
    \begin{tabular}{cccc|l}
        \toprule[2pt]
        \textsc{$\lambda$} &\textsc{$\alpha_{\max}$} & \textsc{full} & \textsc{head} & \textsc{Acc. (\%)} \\
        \midrule
        1 & $\pi/2$ & $\checkmark$ & $\checkmark$ & 76.0 \\
        1 & $\pi/2$ &  & $\checkmark$ & 75.5$_\text{\textcolor{red}{-0.5}}$ \\
        1 & $\pi/2$ & $\checkmark$ &  & 75.7$_\text{\textcolor{red}{-0.3}}$ \\
        \midrule
        0.5 & $\pi/2$ & $\checkmark$ & $\checkmark$ & 75.7$_\text{\textcolor{red}{-0.3}}$ \\
        0.25 & $\pi/2$ & $\checkmark$ & $\checkmark$ & 75.8$_\text{\textcolor{red}{-0.2}}$ \\
        1 & $\pi/4$ & $\checkmark$ & $\checkmark$ & 75.9$_\text{\textcolor{red}{-0.1}}$ \\
        1 & $\pi/8$ & $\checkmark$ & $\checkmark$ & 75.8$_\text{\textcolor{red}{-0.2}}$ \\
        \bottomrule[2pt]
    \end{tabular}
    % }
    \label{tab:ablation}
\end{table}

\noindent\textbf{More Results.}
We additionally conduct semantic segmentation experiments using the MMSegmentation framework \citep{mmcv}. 
Following the same settings as PVT \citep{pvt}, we train models with a Semantic FPN decoder.
Under the same model scale, MirrorLA achieves the best performance.

\begin{table}[h]
    \centering
    
    \caption{\textbf{More semantic segmentation results.} Results on Semantic FPN 80K.}
    % \label{tab:seg}
    % \resizebox{1\linewidth}{!}{
        \begin{tabular}{l|cc|c}
            \toprule[2pt]
            \textsc{Method} & {\textsc{Para}}&\textsc{FLOPs}&\textsc{mIoU}\\
            \midrule
            VAN-B1 \citep{van}&18M&140G&42.9\\
            PVT-v2-B1 \cite{pvtv2}&18M&136G&42.5\\
            RMT-T \citep{rmt}&17M&136G&46.4\\
            FL-PVT-T \citep{flatten}&16M&169G&37.2\\
            Agent-PVT-T \citep{agentattn} &15M&147G&40.2\\
            % Pola-PVT-T \cite{polaformer}&-&-&38.3\\
            RAVLT-T \citep{rala}&18M&136G&47.5\\
            \rowcolor{blue!10}
            MirrorLA-T&18M&136G&\textbf{47.7}\\

            \bottomrule[2pt]
        \end{tabular}
        \label{tab:fpnseg}
    % }
\end{table}

\clearpage
\subsection{Visualizations}
\label{app:visualizations}
\begin{figure}[!h]
    \centering
    \includegraphics[width=0.9\linewidth]{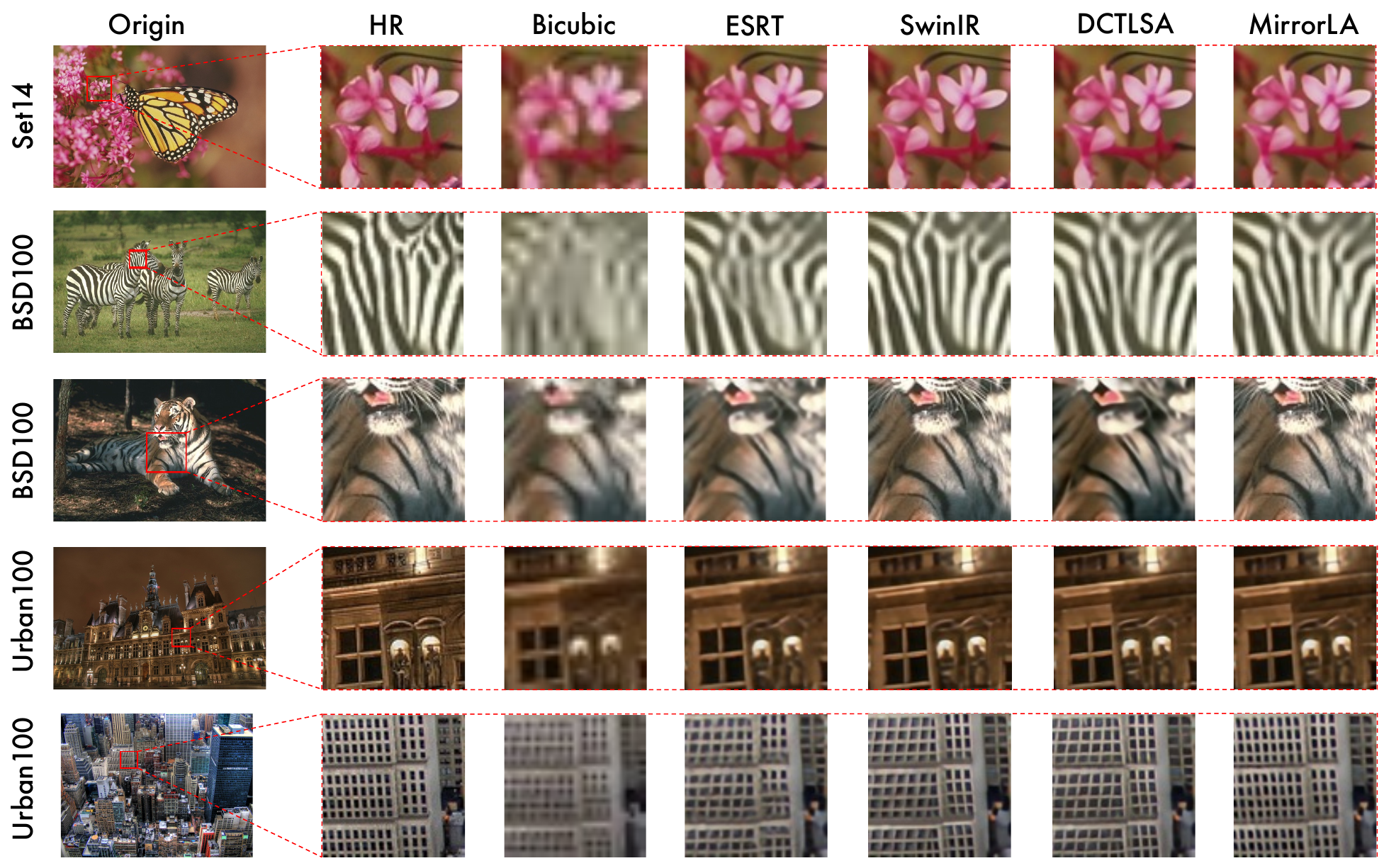}
    \caption{More visualizations on super-resolution tasks. Our model produces more clear boundaries, more accurate shapes, and finer-grained textures compared to baselines.}
    % \label{fig:placeholder}
\end{figure}

\begin{figure}[!h]
    \centering
    \includegraphics[width=0.9\linewidth]{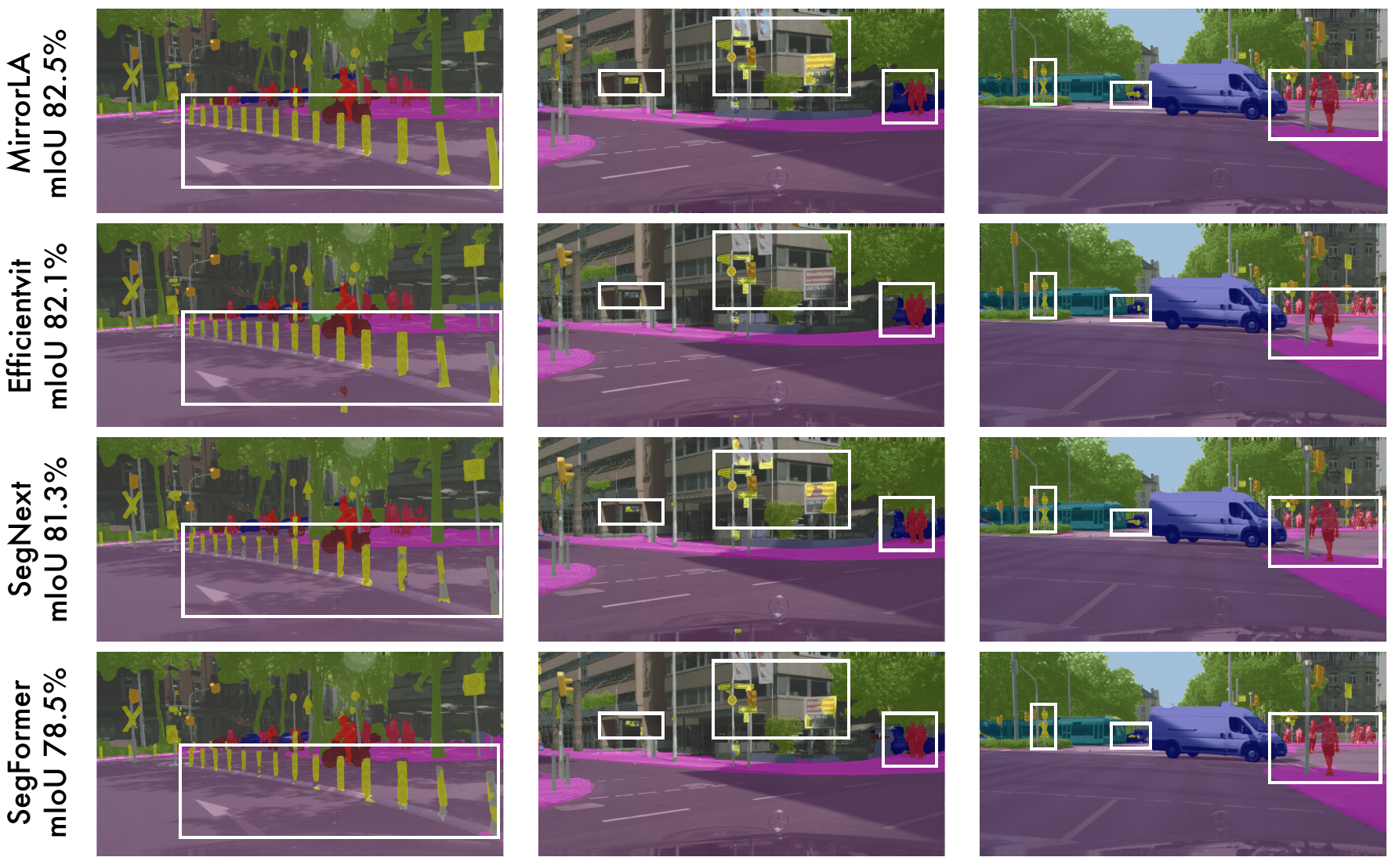}
    \caption{Visualization results on semantic segmention. Compared to other methods, our approach more accurately delineates different objects, effectively avoiding missing predictions and boundary ambiguities.}
    % \label{fig:placeholder}
\end{figure}

\subsection{Discussion}
\label{app:discussion}
\noindent\textbf{Future Work: Hardware-Aware Design.}
The practical utility of linear attention models increasingly relies on high-performance acceleration libraries. A prominent example is Flash Linear Attention (FLA), a specialized Triton-based library designed for the efficient computation of linear attention. While MirrorLA demonstrates superior representational capacity, its full integration with such high-performance hardware kernels presents an intriguing avenue for future research. Currently, the implementation of our geometric modulations, such as Householder reflections, is primarily optimized at the functional level.

In future work, we aim to implement block-wise architectural optimizations to better align with the parallel processing logic of modern GPUs. Specifically, we plan to develop dedicated Triton-based kernels to fuse these geometric operations directly into the FLA framework. Such hardware-aware synergy would be particularly beneficial for high-dimensional 2D vision tasks, where custom tiling strategies could mitigate register pressure and SRAM constraints. By bridging the gap between active geometric modeling and low-level operator fusion, we believe MirrorLA can achieve a more optimal balance between theoretical expressivity and practical inference throughput across diverse modalities.

\end{document}